%% file: acl_latex.tex
\documentclass[11pt]{article}

\usepackage{acl}

\usepackage{times}
\usepackage{latexsym}

\usepackage[T1]{fontenc}

\usepackage[utf8]{inputenc}

\usepackage{microtype}

\usepackage{inconsolata}

\usepackage{graphicx}
\usepackage[utf8]{inputenc} 
\usepackage[T1]{fontenc}    
\usepackage{hyperref}       
\usepackage{url}            
\usepackage{booktabs}       
\usepackage{amsfonts}       
\usepackage{nicefrac}       
\usepackage{microtype}      
\usepackage{xcolor}         

\usepackage{amsmath, amssymb, amsfonts}
\usepackage{bm}
\usepackage{algorithm}
\usepackage{amsthm}
\newtheorem{definition}{Definition}
\theoremstyle{plain}
\usepackage{wrapfig}
\usepackage{xcolor} 
\usepackage{thm-restate}
\usepackage{thmtools}

\usepackage{subcaption}
\usepackage{graphicx}
\usepackage{xcolor}
\usepackage{comment}
\usepackage{booktabs}
\usepackage{algpseudocode}
\newcounter{protocol}
\title{Augmentation-Guided Adversarial Training via Local Stability-Based Sample Selection}

\author{
  Daniel Agyei Asante\thanks{These authors contributed equally.} \\
  Iowa State University, United States
  \And
  Mark Tetteh\footnotemark[1] \\
  Ohio University, United States
  \AND
  Deborah Asamoah \\
  University of Nebraska Omaha, United States
  \And
  Bernard Asante \\
  University of Toronto, Canada
  \AND
  Regina Esi Turkson\thanks{
    Corresponding author.
    Email: \texttt{rturkson@ucc.edu.gh}
  } \\
  University of Cape Coast, Ghana
}

\begin{document}
\maketitle

\input{abstract}

\input{introduction}

\input{related_work_editing}
\input{prelims}
\input{pointwise_neighborhood}

\input{method}

\input{experiments}

\input{conclusion}




\bibliography{references}

\input{appendix}



\end{document}

%% file: abstract.tex
\begin{abstract}
Adversarial training remains one of the most effective defenses against adversarial attacks, but its standard PGD-based training is computationally expensive because adversarial examples are generated for every training sample. Selective adversarial training reduces this cost by perturbing only a subset of samples. However, existing selective techniques typically rely on point-wise difficulty signals such as loss, entropy, margin, or gradients. We show that these point-wise criteria do not capture local instability in a sample’s neighborhood. Motivated by this, we propose Augmentation-Guided Adversarial Training (AGAT), a selective adversarial training framework that uses augmentation-induced local variation to allocate  samples with unstable neighborhoods for adversarial training. 
AGAT ranks samples using Augmentation-based Local Stability (ALS), a lightweight score that measures the normalized change in model logits between an input and its label-preserving augmented variant. 
Experiments across different architectures on MNIST, CIFAR-10, CIFAR-100, and Tiny-ImageNet-200 demonstrate that AGAT achieves comparable or higher robust accuracy while applying adversarial supervision to only 20--30\% of the training samples in each minibatch.
AGAT also requires less training time than all other adversarial training baselines in the computational cost evaluation.

\end{abstract}

%% file: introduction.tex
\section{Introduction}
Neural networks are used in a wide range of applications, including medical image classification~\citep{esteva2017dermatologist}, machine translation~\citep{sutskever2014sequence}, and electricity market price forecasting~\citep{asante2024predicting}. However, deep neural networks are vulnerable to adversarial attacks, in which small and carefully crafted perturbations can cause incorrect predictions while remaining nearly imperceptible to humans~\citep{goodfellow2014explaining}. Such vulnerability creates serious security concerns in safety-critical domains, including autonomous driving, medical diagnosis, biometric authentication, and malware detection~\citep{akhtar2018threat,finlayson2019adversarial,malware}. Among existing defenses, adversarial training is one of the most effective empirical approaches. It improves robustness by training models on adversarially perturbed examples, typically through a min-max optimization objective. However, standard PGD-based adversarial training is computationally expensive, since adversarial perturbations must be generated for every training sample at every training update.

To reduce this cost, prior work has studied selective adversarial training, which selects a subset of samples for adversarial perturbation and leaves the remaining samples unperturbed during training~\citep{sat,margin-based,ids}. The key question in this setting is: \emph{which samples should receive adversarial supervision?} Existing methods typically answer this question using point-wise difficulty scores, such as training loss~\citep{ids}, predictive entropy~\citep{sat}, classifier margin, or gradient-based measures~\citep{margin-based}. These scores are used to rank the training samples, after which a subset is selected for adversarial perturbation and used for adversarial training.

However, point-wise selection provides only a limited view of adversarial vulnerability. These methods evaluate each sample only at its original input location and do not consider how the classifier behaves in the surrounding neighborhood. This is problematic for deep neural networks, whose decision landscapes can be high-dimensional, non-smooth, and locally irregular. For example, a correctly classified sample with low loss or a large margin may appear safe under a point-wise criterion and therefore be excluded from adversarial supervision. Nevertheless, the same sample may lie in a locally unstable region, where small perturbations can quickly change the model prediction. Thus, point-wise selection can overlook samples that are easy at their original locations but fragile in their neighborhoods.

This suggests that existing selection rules are geometrically blind (Figure~\ref{fig:smoothness}): they identify currently difficult samples but may overlook those with unstable local neighborhoods. As a result, point-wise selection in selective adversarial training may leave important regions of the input space under-regularized, making the learned classifier brittle to unseen adversarial perturbations and weakening robust generalization.  

\begin{figure*}[!ht]
    \centering
    \begin{subfigure}{.5\textwidth}
    \includegraphics[width=\linewidth,height=3.2 cm]{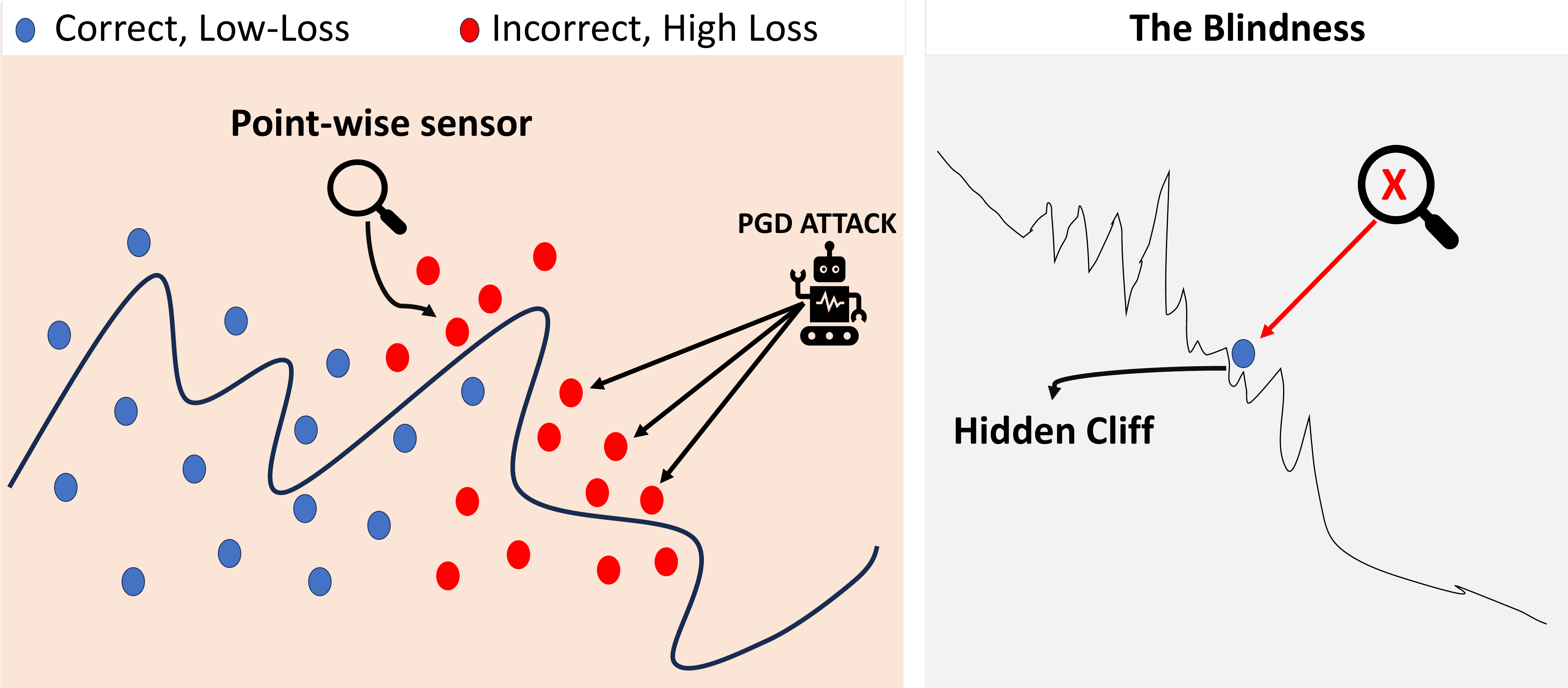}
        \caption{Point-wise selection}
    \end{subfigure}\hfill
    \begin{subfigure}{.5\textwidth}
    \includegraphics[width=\linewidth,height= 3.0 cm]{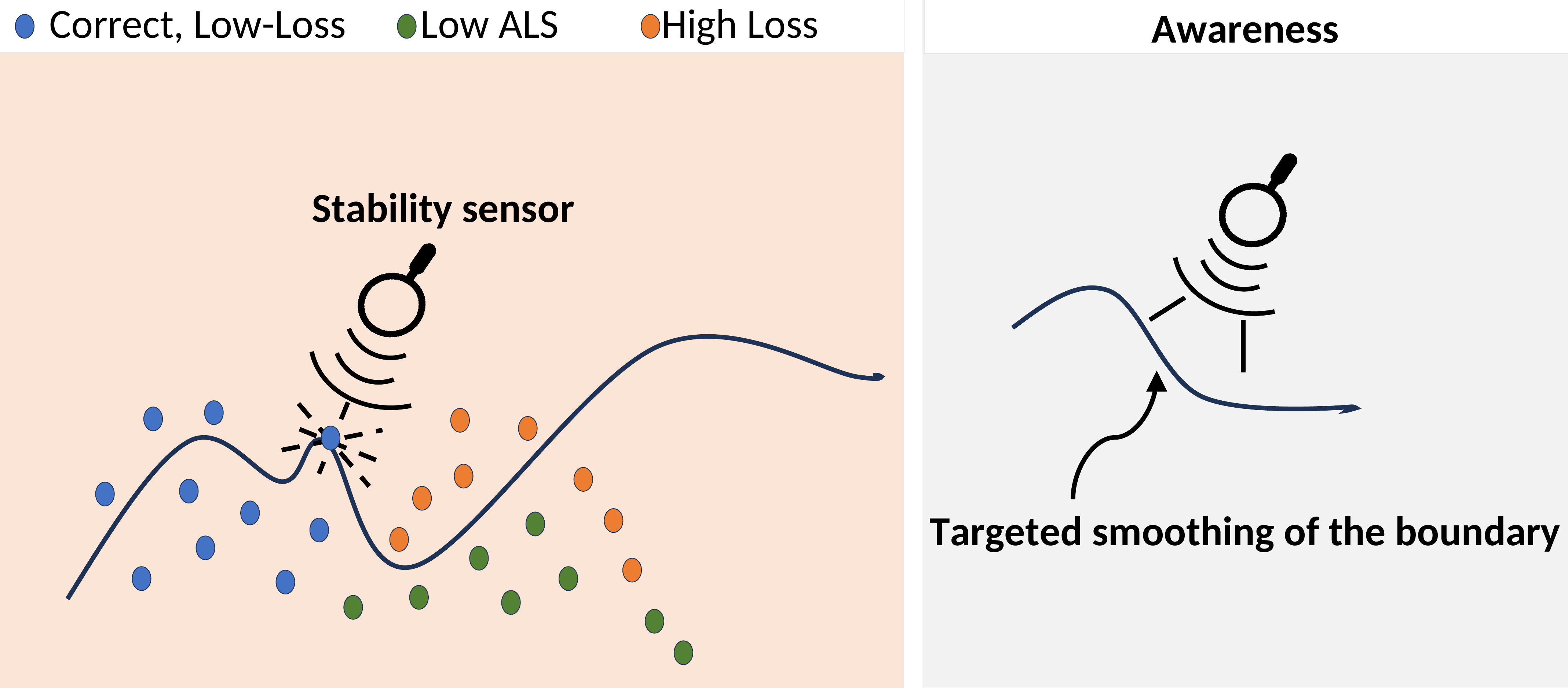}
        \caption{ALS selection}
    \end{subfigure}
   \caption{Conceptual illustration of point-wise and ALS-based sample selection.
Left: \textbf{(a)} Point-wise selection ranks samples using information at the original input location, such as loss, margin, or confidence. As a result, a sample may have low loss or a large prediction margin while still lying near a region where the model’s output changes sharply, which we refer to as a hidden cliff.
Right: \textbf{(b)} ALS-based selection probes the neighborhood of each sample using a label-preserving augmentation \(T(\mathbf{x})\). This allows ALS to detect local instability that point-wise selection can miss and to guide adversarial supervision toward regions where the classifier changes sharply.}
    \label{fig:smoothness}
\end{figure*}

Motivated by this observation, we propose \textbf{\underline{A}}ugmentation-\textbf{\underline{G}}uided \textbf{\underline{A}}dversarial \textbf{\underline{T}}raining (AGAT), a selective adversarial training framework that probes the local neighborhood of each sample using label-preserving augmentations, rather than relying only on the model's point-wise behavior at the original input. This allows sample selection to account for local sensitivity around each input and identify samples whose neighborhoods are more vulnerable to perturbations.

AGAT instantiates this idea through a theoretically motivated stability measure called Augmentation-based Local Stability (ALS), a lightweight neighborhood-level signal for subset selection. ALS measures the change in model logits between the original input and its augmented counterpart. AGAT then uses this signal to rank training samples and select a subset for projected gradient descent (PGD)-based adversarial supervision, while leaving the remaining samples unperturbed during training. Across several models and datasets, our experiments show that AGAT reduces the number of PGD-perturbed samples while improving adversarial robustness. In particular, AGAT remains competitive even when PGD-based adversarial supervision is applied to only 20\% of the samples in each training minibatch.

Our contributions are as follows:
\begin{itemize}
 \item We propose Augmentation-Guided Adversarial Training (AGAT), a selective adversarial training framework that measures how model outputs change under label-preserving augmentations and uses this signal to select a subset of training samples for PGD-based adversarial supervision.

    \item We introduce Augmentation-based Local Stability (ALS), a lightweight neighborhood-level measure that quantifies local prediction sensitivity and provides a practical signal for selecting samples for adversarial supervision.
    
    \item We demonstrate the effectiveness of AGAT across several datasets and model architectures, showing that strong robust generalization can be achieved without applying PGD-based adversarial supervision to every sample in each minibatch.
    
\end{itemize}

%% file: related_work_editing.tex
\section{Related Work}
\label{sec:related_work}

Adversarial training, formalized by \cite{madry2017towards,goodfellow2014explaining}, remains one of the most effective empirical defenses against adversarial attacks. In its standard PGD-based form, adversarial examples are generated for each training sample at every iteration. Although this procedure improves robustness, it introduces substantial computational overhead because multi-step attacks must be computed repeatedly during training. This cost has motivated several efforts to improve the efficiency of robust learning, including stronger data augmentation strategies and selective adversarial training.

In large language models, \citet{asante-etal-2026-decomposed} study how model compression affects adversarial robustness and other aspects of trustworthiness, highlighting the need to assess robustness alongside computational efficiency.

\textbf{Data Augmentation.}
Data augmentation has become an important tool for improving robustness by mitigating robust overfitting. \citet{idbh} show that robust generalization depends on both augmentation diversity and balanced hardness, and propose IDBH to generate diverse, label-preserving augmentations with controlled difficulty to improve robust learning. Building on this idea, AROID~\cite{li2024aroid} applies instance-adaptive augmentation, generating input-specific transformations that improve robustness beyond fixed augmentation policies. These methods mainly apply augmentation to samples during training. In contrast, AGAT uses augmentation as a selection signal: the augmented variant is used to probe local model stability and determine which samples should receive PGD-based adversarial supervision

\textbf{Selective adversarial training.}
Selective adversarial training addresses the computational bottleneck of adversarial training by applying PGD only to a subset of samples within each minibatch. \citet{ids} proposed Informed Data Selection (IDS), a selective adversarial training method that generates adversarial examples for each minibatch and ranks both clean and adversarial samples by their cross-entropy loss. Samples with higher loss are treated as more relevant and selected for the backpropagation update. Another work in selective adversarial training is Subset Adversarial Training (SAT) \cite{sat}, which first ranks training samples using predictive entropy computed from a pretrained standard classifier. Based on this ranking, a subset of samples is selected at either the class level or the sample level to receive PGD-generated adversarial perturbations. More recently, \citet{margin-based} introduced margin-based and gradient-alignment selection strategies for ranking and selecting subsets of samples for adversarial training. Their approach demonstrates that robustness can be achieved by perturbing only a subset of samples.

While existing selective adversarial training techniques provide empirical robustness, they typically rely on point-wise selection criteria such as loss, entropy, margin, or gradient-based scores. These criteria evaluate samples only at their original input location, which can overlook local instability around seemingly easy or confident samples. Our work addresses this limitation by proposing AGAT, a selective adversarial training framework that uses ALS to measure augmentation-induced changes in model logits. By selecting samples based on ALS, AGAT captures neighborhood-level sensitivity and identifies sample-level local instability that existing point-wise selective adversarial training methods may miss.

%% file: prelims.tex
\section{Preliminaries}
\label{sec:preliminaries}

\subsection{Training a Robust Classifier}
\begin{figure}[ht]
\centering
\footnotesize
\begin{minipage}{\columnwidth}
\hrule
\vspace{0.8em}

\refstepcounter{protocol}
\label{prot:learner_adversary}
\textbf{Training Scheme~\theprotocol}\quad Adversarial Training\par

\vspace{0.8em}
\hrule
\vspace{0.45em}

\begin{algorithmic}
\For{each training step}
    \State Sample a minibatch \(\mathcal{B}\)
    \State Choose an adversarial subset
           \(\mathcal{A}\subseteq\mathcal{B}\)
    \Statex \hspace{1.2em}
        \textit{\textbf{Full PGD AT:}}
        \(\mathcal{A}=\mathcal{B}\)
    \Statex \hspace{1.2em}
        \textit{\textbf{Selective PGD AT:}}
        \(\mathcal{A}\subset\mathcal{B}\)
    \State Generate PGD adversarial examples
           for samples in \(\mathcal{A}\)
    \State Train samples in \(\mathcal{A}\) adversarially
    \State Train samples in
           \(\mathcal{B}\setminus\mathcal{A}\) cleanly
    \State Update model parameters \(\theta\)
\EndFor
\end{algorithmic}

\vspace{0.8em}
\hrule
\end{minipage}
\end{figure}

Let the training set be
\(
\mathcal{D}=\{(\mathbf{x}_i,y_i)\}_{i=1}^{n}.
\)
For a given perturbation budget \(\epsilon>0\), an adversary constructs a
perturbed input
\(
\mathbf{x}^{\mathrm{adv}}=\mathbf{x}+\boldsymbol{\delta},
\)
where the perturbation is constrained to lie within an \(\ell_p\)-ball,
\(
\|\boldsymbol{\delta}\|_{p}\leq \epsilon.
\)
The adversary chooses \(\boldsymbol{\delta}\) to produce an input that is close to the
original sample but more difficult to classify correctly.

Adversarial training incorporates these perturbed examples into the learning process in
order to improve robustness. In Scheme~\ref{prot:learner_adversary}, this is represented
by selecting an adversarial subset \(\mathcal{A}\) from each minibatch
\(\mathcal{B}\). In its standard form, often referred to as full adversarial training,
this subset is the entire minibatch, i.e., \(\mathcal{A}=\mathcal{B}\), so every sample
is replaced by an adversarially perturbed version. While this procedure directly
optimizes robustness over the entire minibatch, it can be computationally expensive
because adversarial examples must be generated for all samples at every training step.
Selective adversarial training reduces this cost by choosing a strict subset
\(\mathcal{A}\subset\mathcal{B}\). Samples in \(\mathcal{A}\) are trained
adversarially, while samples in \(\mathcal{B}\setminus\mathcal{A}\) are trained cleanly.
In practice, adversarial examples are commonly generated using gradient-based attacks
such as the Fast Gradient Sign Method (FGSM)~\cite{goodfellow2014explaining} and
Projected Gradient Descent (PGD)~\cite{madry2017towards}.

\subsection{Point-wise Measures for Sample Selection}

Selecting the adversarial subset \(\mathcal{A}\) in Scheme~\ref{prot:learner_adversary} requires a criterion for identifying samples whose adversarial supervision can help preserve model robustness. We provide an
overview of existing point-wise measures used for sample selection in selective
adversarial training. Let \(\mathcal{X} \subset \mathbb{R}^d\) denote the
input space, and let \(f : \mathcal{X} \to \mathbb{R}^K\) be a classifier that maps an
input to logits over \(K\) classes. For an input \(\mathbf{x}\in\mathcal{X}\), we write
\(
f(\mathbf{x}) = \bigl(f_1(\mathbf{x}), \dots, f_K(\mathbf{x})\bigr),
\)
where \(f_k(\mathbf{x})\in\mathbb{R}\) is the logit assigned to class \(k\). The
predicted label is
\(
\hat{y}(\mathbf{x}) = \arg\max_{k\in\{1,\dots,K\}} f_k(\mathbf{x}).
\)

A common point-wise measure is the logit margin~\cite{margin-based}, which compares the logit of the true
class with the largest logit among all incorrect classes. We define this quantity
formally below.

\begin{definition}[Logit margin]
\label{def:margin}
Let \(\mathbf{x}\in\mathcal{X}\) have true label \(y\in\{1,\dots,K\}\). The
\emph{logit margin} at \(\mathbf{x}\) is defined as
\(
M(\mathbf{x})
:=
f_y(\mathbf{x})
-
\max_{j \neq y} f_j(\mathbf{x}).
\)
\end{definition}

The margin \(M(\mathbf{x})\) measures how separated the true class is from its strongest
competing class at the original input \(\mathbf{x}\). If \(M(\mathbf{x})>0\), then the
classifier predicts the correct label, and larger values indicate a larger logit separation. \citet{margin-based} uses
\(M(\mathbf{x})\) as a point-wise selection score to rank training samples for
adversarial subset selection.

Other point-wise measures used for sample selection include cross-entropy loss and
predictive entropy. For example, \citet{ids} compute the standard cross-entropy loss for
each sample in a minibatch and use it to rank samples for adversarial subset selection.
Similarly, \citet{sat} use predictive entropy, computed from a pretrained standard
classifier, to rank training samples and select the subset used for adversarial training.

%% file: pointwise_neighborhood.tex
\section{Point-wise Confidence Does Not Imply Local Stability}
\label{sec:point-wise confidence}

The adversarial robustness of a neural network \( f \) can be characterized by its worst-case sensitivity to small perturbations of an input \( \mathbf{x} \). In this sense, robustness is closely related to the global and local Lipschitz behavior of \( f \) with respect to suitable input and output distances, as discussed in prior work \cite{agarwal2019improving,randomized_smoothing}. We define the local Lipschitz constant in Definition~\ref{def:lipschitz}, which formalizes the local sensitivity of a function around a given input.

\begin{definition}[ Lipschitz constant]
\label{def:lipschitz}
Let $f : \mathcal{X} \to \mathbb{R}^K$, $\mathbf{x} \in \mathcal{X}$, and $\varepsilon > 0$. Define the ball of radius $\varepsilon$ around $\mathbf{x}$ by
\(
B_{\varepsilon}(\mathbf{x})
:=
\left\{
\mathbf{z} \in \mathcal{X} : \|\mathbf{z} - \mathbf{x}\| \le \varepsilon
\right\}.
\)
The \emph{local Lipschitz constant} of $f$ at $\mathbf{x}$ with radius $\varepsilon$ is defined as
\[
L_{\mathbf{x}}(\varepsilon)
:=
\sup_{\substack{\mathbf{x}_1, \mathbf{x}_2 \in B_{\varepsilon}(\mathbf{x}) \\ \mathbf{x}_1 \neq \mathbf{x}_2}}
\frac{\|f(\mathbf{x}_1) - f(\mathbf{x}_2)\|}{\|\mathbf{x}_1 - \mathbf{x}_2\|}.
\]
\end{definition}

This quantity measures the largest change in the output of $f$ per change in the input within the neighborhood $B_{\varepsilon}(\mathbf{x})$.

Using this definition, Theorem~\ref{thm:pointwise-blindness} formalizes the limitation of point-wise selection criteria. It shows that a sample may appear highly reliable under standard point-wise measures---for example, by having very large margin and very small cross-entropy loss---while the classifier remains highly non-smooth or unstable in a neighborhood of that same sample. For simplicity, we state the Theorem for the binary case \( K=2 \). The extension to \( K>2 \) is immediate by adding extra class logits that do not affect the margin or prediction.
\begin{restatable}{theorem}{pointwiseconfidence}
\label{thm:pointwise-blindness}
Let \( \tau > 0 \), \( \varepsilon > 0 \), and \( C > 0 \). Then there exist a classifier
\(
f:\mathbb{R}^d \to \mathbb{R}^2,
\)
a point \( \mathbf{x}_0 \in \mathbb{R}^d \), a label \( y=1 \), and a radius \( r>0 \) such that
\(
M(\mathbf{x}_0) > \tau,
\qquad
\ell(f(\mathbf{x}_0),1) < \varepsilon,
\qquad
L_{\mathbf{x}_0}(r) > C,
\)
where \( \ell(\mathbf{z},y) = -\log\!\left(\frac{\exp(z_y)}{\sum_{j=1}^2 \exp(z_j)}\right) \) denotes the cross-entropy loss, \( M(\mathbf{x}_0) \) is the margin as defined in Definition~\ref{def:margin}, and \( L_{\mathbf{x}_0}(r) \) is the local Lipschitz constant as defined in Definition~\ref{def:lipschitz}.
\end{restatable}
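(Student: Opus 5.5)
The plan is an explicit construction: a constant logit offset fixes the margin and loss at \(\mathbf{x}_0\), and a small-amplitude, high-frequency oscillation makes the local Lipschitz constant as large as we like without changing the values at \(\mathbf{x}_0\). Concretely, take \(\mathbf{x}_0=\mathbf{0}\), \(r=1\), a constant \(A>0\), and a frequency \(\omega>0\). Define
\[
f(\mathbf{x}) = \bigl(A + \sin(\omega x_1),\; 0\bigr),
\]
where \(x_1\) is the first coordinate of \(\mathbf{x}\) and the norm is the Euclidean one. (The argument works for any \(\ell_p\) norm, since only one coordinate moves below.)

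First I handle the point-wise quantities. At \(\mathbf{x}_0=\mathbf{0}\) we have \(\sin 0 = 0\), so \(f(\mathbf{x}_0)=(A,0)\). Hence \(M(\mathbf{x}_0)=A\), and
\[
\ell(f(\mathbf{x}_0),1)=\log\bigl(1+e^{-A}\bigr)\le e^{-A}.
\]
Choosing \(A>\max\{\tau,\ \log(1/\varepsilon)\}\) gives both \(M(\mathbf{x}_0)>\tau\) and \(\ell(f(\mathbf{x}_0),1)<\varepsilon\). I will also take \(A>1\), so that \(f_1>f_2\) everywhere. This is not needed for the theorem, but it shows the instability is hidden: the prediction never even changes.

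Second, I lower-bound \(L_{\mathbf{x}_0}(1)\) using two explicit points, taking \(\omega>2\pi\). Let \(\mathbf{x}_1=\frac{\pi}{2\omega}\mathbf{e}_1\) and \(\mathbf{x}_2=-\frac{\pi}{2\omega}\mathbf{e}_1\). Both lie in \(B_1(\mathbf{0})\) because \(\pi/(2\omega)<1\). Their outputs differ only in the first coordinate, with difference \(\sin(\pi/2)-\sin(-\pi/2)=2\), and the inputs are at distance \(\pi/\omega\). So the difference quotient is \(2\omega/\pi\), and therefore
\[
L_{\mathbf{x}_0}(1)\ge \frac{2\omega}{\pi}.
\]
Taking \(\omega>\max\{2\pi,\ \pi C\}\) gives \(L_{\mathbf{x}_0}(1)>C\).

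The only point needing care is that the choices stay independent: \(A\) controls the margin and loss, and \(\omega\) controls the Lipschitz constant. This works because the oscillation vanishes at \(\mathbf{x}_0\) and has amplitude bounded by \(1\), independently of \(\omega\). Adding more constant logits, say fixed at \(0\), extends the example to \(K>2\) as the paper remarks. I expect no real obstacle here; the main task is stating the construction cleanly.
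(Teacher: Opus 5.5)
Your proof is correct and follows essentially the same route as the paper. The paper also sets $f_2 \equiv 0$ and takes $f_1$ to be a constant $A$ plus a term vanishing at $\mathbf{x}_0$, choosing $A$ for the margin and loss and an independent steepness parameter for the Lipschitz bound; it uses the linear ramp $f_1(\mathbf{z}) = A - n\langle \mathbf{z}-\mathbf{x}_0,\mathbf{u}\rangle$ (difference quotient exactly $n$ for any $r$) where you use a bounded oscillation. The one trade-off is that your choice keeps the prediction fixed everywhere, whereas the paper's ramp can be tuned ($n > A/r$) to place an actual label flip inside $B_r(\mathbf{x}_0)$, which fits the paper's ``hidden cliff'' narrative more closely.
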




\begin{proof}
See Appendix \ref{appendix:point-wise-confidence} for the proof.
\end{proof}

Theorem~\ref{thm:pointwise-blindness} shows that point-wise criteria such as margin and loss characterize only the model's behavior at the single point $\mathbf{x}_0$; they do not control how rapidly the logits may vary around $\mathbf{x}_0$. In particular, a point can look \emph{safe}, yet still lie in a locally unstable region of the decision landscape. This observation motivates the use of neighborhood-sensitive criteria. 


%% file: method.tex
\section{Augmentation Guided Adversarial Training (AGAT)}
\label{sec:methods}
The effectiveness of selective adversarial training depends critically on the criterion
used to choose the adversarial subset \(\mathcal{A}\). In this section, we introduce
AGAT, a selective adversarial training
procedure that constructs \(\mathcal{A}\) using augmentation-induced local variation.
AGAT assigns each training sample an Augmentation-based Local Stability (ALS) score,
which measures how sensitive the model responds to a label-preserving augmentation of
that sample. In Section~\ref{subsec:als}, we define ALS and show how it provides an
efficient proxy for local instability. We then describe how ALS scores are used to select
the adversarial subset \(\mathcal{A}\) in Section~\ref{subsec:subset_selection}. The
complete AGAT procedure is summarized in Algorithm~\ref{alg:agat}.



\subsection{Augmentation-based Local Stability}
\label{subsec:als}

We use ALS as a proxy for measuring the stability of the classifier around  training samples. For a sample \((\mathbf{x}_i, y_i)\), we construct a label-preserving augmented variant
\(
\widetilde{\mathbf{x}}_i = T_i(\mathbf{x}_i),
\)
where \(T_i\) denotes the augmentation applied to \(\mathbf{x}_i\). We pass \(\mathbf{x}_i\) and \(\widetilde{\mathbf{x}}_i\) separately through the model \(f_\theta\) to obtain their corresponding logit outputs. To quantify the sensitivity of the classifier to this transformation, we define ALS score as
\begin{equation}
\label{eq:als_score}
s_i
=
\mathrm{ALS}(\mathbf{x}_i)
=
\frac{
\left\|
f_{\theta}(\mathbf{x}_i)
-
f_{\theta}(\widetilde{\mathbf{x}}_i)
\right\|_2
}{
\left\|
\mathbf{x}_i-\widetilde{\mathbf{x}}_i
\right\|_2+\eta
}.
\end{equation}
This captures the augmentation-induced change in logits relative to the corresponding change in the input, with \(\eta>0\) ensuring numerical stability.

To construct the augmented variant \(\widetilde{\mathbf{x}}_i\), we use Improved Diversity and Balanced Hardness (IDBH) augmentations \cite{idbh}. IDBH applies a multi-layer stochastic augmentation pipeline that includes horizontal flipping, cropshift, shape transformations, and random erasing to produce diverse augmented samples while maintaining controlled hardness and preserving the label of the original input $\mathbf{x}_i$. This balance is important for ALS because \(T_i\) should introduce a meaningful input change that reveals local variation in the model output while preserving the original sample's class label. In this way, IDBH helps provide a structured label-preserving transformation suitable for ALS.

Theorem~\ref{thm:als-lower-bound} establishes ALS as a conservative measure of local instability. It shows that \(\mathrm{ALS}(\mathbf{x})\) is bounded above by the local Lipschitz constant \(L_{\mathbf{x}}(\epsilon)\). Thus, ALS cannot overestimate the true worst-case local sensitivity; instead, it provides a sample-based proxy for neighborhood instability. Consequently, a high ALS value indicates strong evidence of local non-smoothness.

\begin{restatable}{theorem}{alslowerbound}
\label{thm:als-lower-bound}
Let \( \mathbf{x} \in X \), and assume that the augmentation \( T(\mathbf{x}) \) lies in the local neighborhood \( B_\epsilon(\mathbf{x}) \) where \( T(\mathbf{x}) \neq \mathbf{x} \). Then, for every such sample \(\mathbf{x}\) , the ALS score defined in Equation~\eqref{eq:als_score} satisfies
\(
\mathrm{ALS}(\mathbf{x}) \leq L_{\mathbf{x}}(\epsilon).
\)
\end{restatable}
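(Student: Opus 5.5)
The plan is to compare the ALS ratio directly with one admissible pair in the supremum that defines \(L_{\mathbf{x}}(\epsilon)\) in Definition~\ref{def:lipschitz}. Set \(\mathbf{x}_1=\mathbf{x}\) and \(\mathbf{x}_2=T(\mathbf{x})=\widetilde{\mathbf{x}}\). First we check that this pair is admissible. Trivially \(\|\mathbf{x}-\mathbf{x}\|=0\le\epsilon\), so \(\mathbf{x}\in B_\epsilon(\mathbf{x})\). By hypothesis \(T(\mathbf{x})\in B_\epsilon(\mathbf{x})\) and \(T(\mathbf{x})\neq\mathbf{x}\). The pair therefore lies in the index set of the supremum, and
\[
\frac{\|f(\mathbf{x})-f(\widetilde{\mathbf{x}})\|}{\|\mathbf{x}-\widetilde{\mathbf{x}}\|}\le L_{\mathbf{x}}(\epsilon).
\]
Here we take the norms in Definition~\ref{def:lipschitz} to be the Euclidean norms used in Equation~\eqref{eq:als_score}, with \(f=f_\theta\).

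Next we handle the stabilizer \(\eta>0\). Since \(\|\mathbf{x}-\widetilde{\mathbf{x}}\|_2>0\), we have \(\|\mathbf{x}-\widetilde{\mathbf{x}}\|_2+\eta>\|\mathbf{x}-\widetilde{\mathbf{x}}\|_2>0\). The numerator \(\|f_\theta(\mathbf{x})-f_\theta(\widetilde{\mathbf{x}})\|_2\) is nonnegative, so enlarging the denominator can only decrease the fraction:
\[
\mathrm{ALS}(\mathbf{x})=\frac{\|f_\theta(\mathbf{x})-f_\theta(\widetilde{\mathbf{x}})\|_2}{\|\mathbf{x}-\widetilde{\mathbf{x}}\|_2+\eta}\le\frac{\|f_\theta(\mathbf{x})-f_\theta(\widetilde{\mathbf{x}})\|_2}{\|\mathbf{x}-\widetilde{\mathbf{x}}\|_2}.
\]
Chaining this with the previous display gives \(\mathrm{ALS}(\mathbf{x})\le L_{\mathbf{x}}(\epsilon)\). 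The bound also holds if \(L_{\mathbf{x}}(\epsilon)=+\infty\), which Theorem~\ref{thm:pointwise-blindness} shows can be arbitrarily large in practice.

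There is no real obstacle in this argument. The only point needing care is bookkeeping about the norms. If Definition~\ref{def:lipschitz} were read with a different input norm (for example \(\ell_\infty\) for the ball), the ball-membership hypothesis would be stated in that norm, but the ratio must still be compared in the \(\ell_2\) norms used by ALS. To avoid this, I would state explicitly that the theorem is understood with \(\ell_2\) norms throughout.

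Without that convention, the inequality would acquire a norm-equivalence constant, such as \(\sqrt{d}\), and the clean statement would fail. So fixing the norm convention up front is the one step I would make sure to include.
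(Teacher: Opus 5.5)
Your proof is correct and follows the paper's argument: take $(\mathbf{x},T(\mathbf{x}))$ as an admissible pair in the supremum defining $L_{\mathbf{x}}(\epsilon)$. Your extra step showing that the $+\eta$ only shrinks the ratio fixes a slip in the paper, whose proof writes $\mathrm{ALS}(\mathbf{x})$ without $\eta$, contrary to Equation~\eqref{eq:als_score}; your closing aside is off, though, since an $\ell_\infty$ input norm would not cost a $\sqrt{d}$ factor ($\|\cdot\|_\infty\le\|\cdot\|_2$ only enlarges $L_{\mathbf{x}}(\epsilon)$, and a loss would arise instead for, e.g., an $\ell_1$ input norm), but this does not affect your argument under the $\ell_2$ convention.
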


\begin{proof}
See Appendix \ref{appendix:als-is-deterministic} for the proof.
\end{proof}

\subsection{Constructing the Adversarial Subset}
\label{subsec:subset_selection}

Let \(\mathcal{B}=\{(x_i,y_i)\}_{i=1}^{B}\) denote a minibatch of size \(B\). We compute the ALS score \(s_i\) for each sample in \(\mathcal{B}\) and rank the samples according to these scores. Given an adversarial selection ratio \(\tau\in(0\% ,100\%]\), we select \(k=\lceil \tau B\rceil\) samples for adversarial supervision.
Formally, the selected samples are represented by the index set
\[
\mathcal{A}=R_{\tau}(s_1,\ldots,s_B),
\]
where \(R_{\tau}\) denotes an ALS-based selection rule that returns \(k\) indices according to the ALS ranking. This formulation supports different ALS-ranked policies, such as selecting the highest-ALS samples or selecting samples from a specified region of the ALS ranking.  We instantiate \(R_{\tau}\) as a top-\(k\) selection rule that returns the indices of the \(k\) samples with the largest ALS scores. Higher ALS scores indicate greater changes in model outputs relative to the magnitude of the sampled input changes. Selecting these samples therefore focuses adversarial supervision on regions with stronger observed local instability, with the aim of preserving robustness while perturbing only a subset of each minibatch. The indices in the minibatch that are not selected for adversarial supervision form the clean subset
\(
\mathcal{C}=\{1,\ldots,B\}\setminus \mathcal{A}.
\)
Samples in \(\mathcal{A}\) are adversarially perturbed using PGD, while samples in \(\mathcal{C}\) remain unperturbed. Hence, when \(\tau<1\), AGAT reduces the number of PGD computations relative to full PGD adversarial training.




\begin{algorithm}[t]
\caption{AGAT}
\small
\label{alg:agat}
\begin{algorithmic}[1]
\Require Training data \(\mathcal{D}\), model \(f_{\theta}\), selection ratio \(\tau\), ALS selection rule \(R_{\tau}\), IDBH augmentation distribution \(\mathcal{T}_{\mathrm{IDBH}}\), PGD steps \(K\), clean-loss weight \(\lambda\), stability constant \(\eta\)
\For{each training iteration}
    \State Sample minibatch \(\mathcal{B}=\{(\mathbf{x}_i,y_i)\}_{i=1}^{B}\)
    \For{\(i=1,\ldots,B\)}
        \State Sample IDBH transformation \(T_i\sim\mathcal{T}_{\mathrm{IDBH}}\)
        \State Form augmented variant \(\widetilde{\mathbf{x}}_i=T_i(\mathbf{x}_i)\)
        \State Compute
        \[
        s_i
        =
        \frac{
        \left\|
        f_{\theta}(\mathbf{x}_i)
        -
        f_{\theta}(\widetilde{\mathbf{x}}_i)
        \right\|_2
        }{
        \left\|
        \mathbf{x}_i-\widetilde{\mathbf{x}}_i
        \right\|_2+\eta
        }
        \]
    \EndFor
    \State Set \(k=\lceil \tau B\rceil\)
    \State Select adversarial subset \(\mathcal{A}=R_{\tau}(s_1,\ldots,s_B)\), with \(|\mathcal{A}|=k\)
    \State Set clean subset \(\mathcal{C}=\{1,\ldots,B\}\setminus\mathcal{A}\)
    \For{\(i\in\mathcal{A}\)}
        \State Generate \(\mathbf{x}^{\mathrm{adv}}_i\) using \(K\)step PGD
    \EndFor
    \State Compute
        \[
    \begin{aligned}
    \mathcal{L}_{\mathrm{AGAT}}
    =
    \frac{1}{B}\Bigg[
    &\sum_{i\in\mathcal{A}}
    \ell(f_{\theta}(\mathbf{x}^{\mathrm{adv}}_i),y_i) \\
    &+
    \lambda\sum_{j\in\mathcal{C}}
    \ell(f_{\theta}(\mathbf{x}_j),y_j)
    \Bigg].
    \end{aligned}
    \]
    \State Update model parameters \(\theta\) by minimizing \(\mathcal{L}_{\mathrm{AGAT}}\)
\EndFor
\end{algorithmic}
\end{algorithm}

%% file: experiments.tex
\section{Experiments}
\label{sec:experiment}

We evaluate AGAT with a focus on its ability to preserve adversarial robustness while reducing both the number of PGD-perturbed samples and the computational cost of training. We first describe the experimental protocol and then examine ALS as a selection signal by studying its ranking structure and class-wise distribution (Section~\ref{subsec:als_analysis}). Finally, we report results across four datasets and two model architectures (Sections~\ref{results_mnist}--\ref{results_tiny}). All experiments are performed on an NVIDIA L40S GPU.

\paragraph{Training and evaluation protocol.}
Let \(\mathcal{D}_{\mathrm{test}}=\{(\mathbf{x}^{\mathrm{test}}_i,y^{\mathrm{test}}_i)\}_{i=1}^{M}\)
denote the test set, where \(M\) is the number of test samples, and let \(f_{\theta}\) be the trained
classifier. Let \(\hat{y}(\mathbf{x})=\arg\max_k f_{\theta,k}(\mathbf{x})\) be the
predicted label. We define clean accuracy as
\[
\mathrm{Acc}_{\mathrm{clean}}(f_{\theta})
=
\frac{1}{M}
\sum_{i=1}^{M}
\mathbf{1}
\left[
\hat{y}(\mathbf{x}^{\mathrm{test}}_i)
=
y^{\mathrm{test}}_i
\right].
\]
To evaluate adversarial robustness, we perturb each test example within a chosen budget \(\epsilon\).
 Specifically, for each clean test input
\(\mathbf{x}^{\mathrm{test}}_i\), we construct an adversarial perturbation
\(\boldsymbol{\delta}_i\) satisfying
\(
\|\boldsymbol{\delta}_i\|_p \leq \epsilon .
\)
We then measure robust accuracy as the fraction of perturbed test examples
that are still classified correctly:
\[
\mathrm{Acc}_{\mathrm{rob}}(f_{\theta})
=
\frac{1}{M}
\sum_{i=1}^{M}
\mathbf{1}
\left[
\hat{y}
\left(
\mathbf{x}^{\mathrm{test}}_i+\boldsymbol{\delta}_i
\right)
=
y^{\mathrm{test}}_i
\right].
\]

Following established practice~\cite{margin-based,idbh,ids}, we use dataset-specific PGD configurations
during training. MNIST \cite{726791} is trained with 40 steps, \(\epsilon = 0.3\), and
\(\alpha = 0.01\), while CIFAR-10 \cite{krizhevsky2009learning}, CIFAR-100 \cite{krizhevsky2009learning}, and Tiny-ImageNet-200 \cite{Deng2009ImageNet} are trained
with 10 steps, \(\epsilon = 8/255\), and \(\alpha = 2/255\). For robust
evaluation, we use a 50-step PGD attack while keeping the same dataset-specific
values of \(\epsilon\) and \(\alpha\). Unless otherwise stated, PGD-50 in the tables denotes robust accuracy under attack.

\textbf{Datasets.}
We conduct our experiments on MNIST~\cite{726791}, CIFAR-10~\cite{krizhevsky2009learning}, CIFAR-100 \cite{krizhevsky2009learning}, and Tiny-ImageNet-200 \cite{Deng2009ImageNet}.
MNIST contains 60,000 training images and 10,000 test images across 10 classes. CIFAR-10 and CIFAR-100 each contain 50,000 training images and 10,000 test images across 10 and 100 classes, respectively. Tiny-ImageNet-200 contains 100,000 training images and 10,000 validation images across 200 classes.

\textbf{Models.} For MNIST, we use the Madry CNN model~\cite{madry2017towards}, a standard architecture for MNIST classification. It consists of two convolutional layers with 32 and 64 filters of size $5 \times 5$, respectively, each followed by ReLU activation and max-pooling, and a fully connected hidden layer. For CIFAR-10, CIFAR-100, and Tiny-ImageNet-200, we use PreActResNet-18~\cite{preactresnet18}.

\textbf{Methods.} We compare AGAT with Clean Training, Full PGD-AT~\cite{madry2017towards}, IDS~\cite{ids}, Random Selection, and Margin-Based Selection~\cite{margin-based}. Clean Training uses only unperturbed inputs, whereas Full PGD-AT applies PGD-based adversarial training to every sample in each minibatch. IDS and Margin-Based Selection are selective adversarial training baselines, while Random Selection perturbs a randomly selected subset of each minibatch. 

\textbf{Setup.} 
For AGAT, samples are ranked based on their ALS values, and a subset of samples with the highest ALS values is selected. Samples with high ALS values exhibit the greatest changes in model predictions under label-preserving augmentations.
The selection ratio \(\tau\) denotes the percentage of each minibatch that receives PGD adversarial supervision. For example, \(\tau=20\%\) means that only 20\% of the samples in each minibatch are adversarially perturbed.
\subsection{ALS Analysis}
\label{subsec:als_analysis}

AGAT relies on ALS to rank samples for adversarial supervision, so its effectiveness depends on the quality of this ranking. We therefore examine whether ALS produces a meaningful score distribution during training.
Table~\ref{tab:als_percentiles_weak} reports the ALS score distribution across model snapshots saved at epochs 8, 12, and 16. The 90th, 95th, and 99th percentiles are consistently higher than the median across all snapshots, indicating that ALS produces a non-uniform ranking with a clear high-score tail during training.


\begin{table}[ht]
\centering
\small

\resizebox{\columnwidth}{!}{%
\begin{tabular}{lcccccc}
\toprule
Checkpoint & Mean & Median & 90th pct. & 95th pct. & 99th pct. & Max \\
\midrule
Epoch 8  & 0.198 & 0.179 & 0.340 & 0.398 & 0.535 & 2.065 \\
Epoch 12 & 0.247 & 0.223 & 0.417 & 0.488 & 0.662 & 2.793 \\
Epoch 16 & 0.266 & 0.239 & 0.445 & 0.519 & 0.693 & 3.053 \\
\bottomrule
\end{tabular}%
}

\caption{ALS score percentiles on CIFAR-10 computed from models saved at different epochs.}
\label{tab:als_percentiles_weak}

\end{table}

Figure~\ref{fig:als_rank_weak} shows the CIFAR-10 training samples sorted from highest to lowest ALS score. The curve decreases sharply among the highest-ranked samples and becomes flatter through the middle of the ranking. This indicates that ALS identifies a compact subset of samples with distinctly higher neighborhood sensitivity and provides a structured ranking for selective adversarial training.

\begin{figure*}[t]
    \centering
    \begin{subfigure}{0.48\textwidth}
        \centering
        \includegraphics[width=\linewidth]{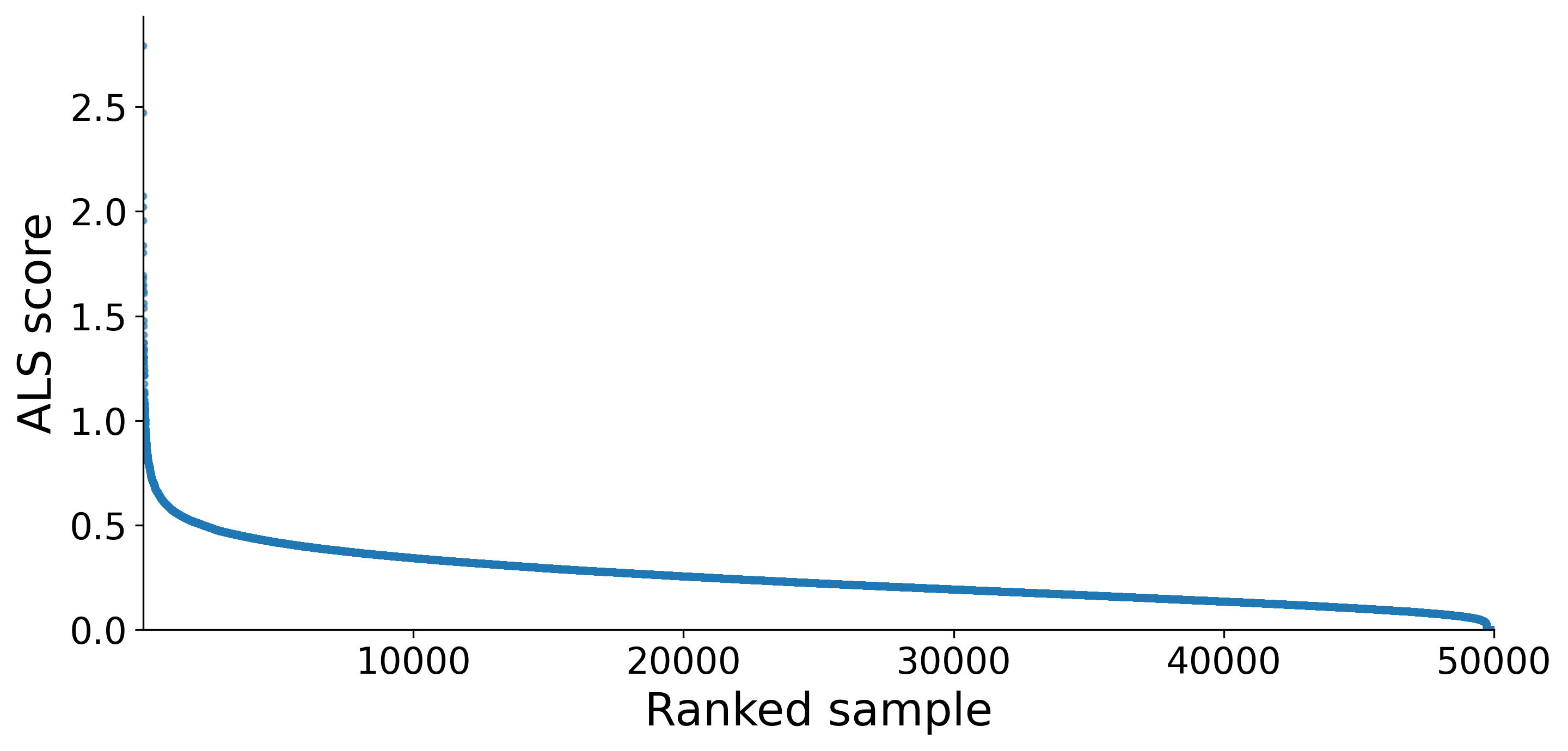}
        \caption{Ranked ALS scores}
        \label{fig:als_rank_weak}
    \end{subfigure}
    \hfill
    \begin{subfigure}{0.48\textwidth}
        \centering
        \includegraphics[width=\linewidth]{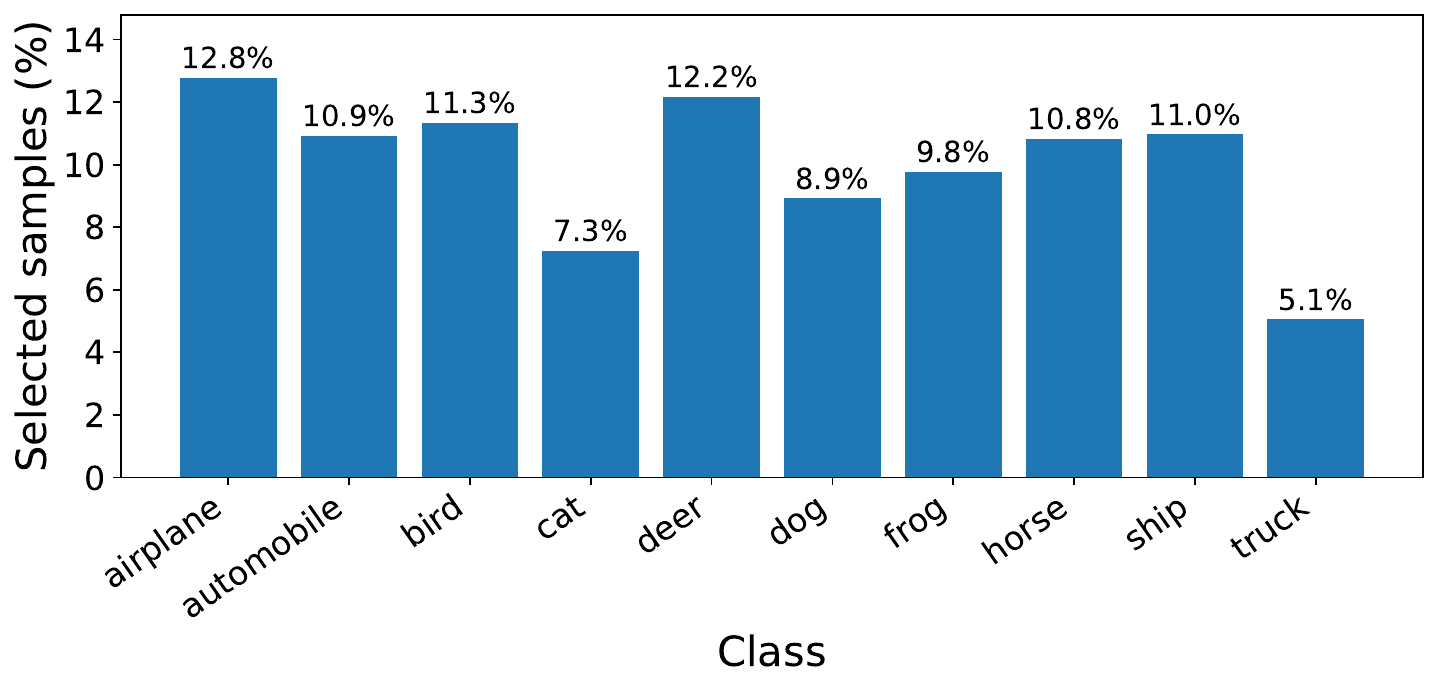}
        \caption{Class distribution of the ALS-selected subset}
        \label{fig:als_selected_class_distribution}
    \end{subfigure}
    \caption{
ALS analysis on CIFAR-10 at epoch 12. \textbf{(a)} Training samples sorted by ALS score. \textbf{(b)} Class distribution of the top-30\% ALS-selected subset.
}
\label{fig:als_analysis_weak}
\end{figure*}

Figure~\ref{fig:als_selected_class_distribution} shows the class composition of the top-30\% ALS-selected subset on CIFAR-10. The selected samples cover all classes rather than being concentrated in only a few labels. Thus, ALS produces a class-diverse subset for adversarial training.

\subsection{Results on MNIST}
\label{results_mnist}
Table~\ref{tab:mnist_results} reports the MNIST results using the Madry CNN model. Clean Training achieves high clean accuracy but provides almost no robustness under PGD-50. AGAT maintains robust accuracy close to Full PGD-AT while perturbing only 20--30\% of each minibatch: 93.74\% at \(\tau=30\%\) and 93.52\% at \(\tau=20\%\), compared with 93.86\% for Full PGD-AT. It also achieves higher mean robust accuracy than Random Selection, IDS, and Margin-Based Selection at both selection ratios.












\begin{table}[ht]
\centering
\small

\resizebox{\columnwidth}{!}{%
\begin{tabular}{lccc}
\toprule
Method & $\tau$ & Clean Accuracy (\%) & PGD-50 (\%) \\
\midrule
Clean Training
& $-$
& $99.36 \pm 0.01$
& $0.29 \pm 0.07$ \\

Full PGD-AT
& $-$
& $98.71 \pm 0.04$
& $\mathbf{93.86 \pm 0.06}$ \\

\midrule
IDS
& 30
& $98.96 \pm 0.04$
& $93.35 \pm 0.02$ \\

IDS
& 20
& $97.88 \pm 0.28$
& $89.19 \pm 1.49$ \\

\midrule
Random Selection
& 30
& $97.36 \pm 0.00$
& $91.66 \pm 0.07$ \\

Random Selection
& 20
& $97.29 \pm 0.08$
& $91.82 \pm 0.09$ \\

\midrule
Margin-Based
& 30
& $98.86 \pm 0.23$
& $92.32 \pm 0.26$ \\

Margin-Based
& 20
& $98.84 \pm 0.01$
& $91.60 \pm 1.05$ \\

\midrule
AGAT 
& 30
& $98.89 \pm 0.09$
& $93.74 \pm 0.37$ \\

AGAT 
& 20
& $98.77 \pm 0.36$
& $93.52 \pm 0.28$ \\

\bottomrule
\end{tabular}%
}

\caption{MNIST classification performance using the Madry CNN model. Results are reported as mean $\pm$ standard deviation over three runs.}
\label{tab:mnist_results}

\end{table}

\subsection{Results on CIFAR-10}
\label{results_cifar10}

Table~\ref{tab:cifar10_results} reports the CIFAR-10 results using PreActResNet-18. Clean Training achieves the highest clean accuracy but provides no robustness under PGD-50. AGAT achieves the strongest robust accuracy at both selection ratios. At \(\tau=20\%\), it improves robust accuracy over Full PGD-AT by 2.40 percentage points and over Margin-Based Selection by 6.57 percentage points, while also maintaining higher clean accuracy.












\begin{table}[ht]
\centering
\small

\resizebox{\columnwidth}{!}{%
\begin{tabular}{lccc}
\toprule
Method & $\tau$ & Clean Accuracy (\%) & PGD-50 (\%) \\
\midrule
Clean Training
& $-$
& $\mathbf{83.46 \pm 0.30}$
& $0.00 \pm 0.00$ \\

Full PGD-AT
& $-$
& $73.87 \pm 0.04$
& $36.09 \pm 0.28$ \\

\midrule
IDS
& 30
& $71.69 \pm 0.04$
& $32.21 \pm 1.07$ \\

IDS
& 20
& $44.66 \pm 2.32$
& $29.52 \pm 0.16$ \\

\midrule
Random Selection
& 30
& $72.76 \pm 0.32$
& $31.00 \pm 0.59$ \\

Random Selection
& 20
& $71.44 \pm 0.66$
& $31.95 \pm 0.08$ \\

\midrule
Margin-Based
& 30
& $73.84 \pm 0.09$
& $31.46 \pm 0.98$ \\

Margin-Based
& 20
& $73.56 \pm 0.23$
& $31.92 \pm 0.79$ \\

\midrule
AGAT 
& 30
& $76.75 \pm 0.17$
& $37.85 \pm 1.21$ \\

AGAT 
& 20
& $74.40 \pm 0.01$
& $\mathbf{38.49 \pm 0.25}$ \\

\bottomrule
\end{tabular}%
}

\caption{CIFAR-10 classification performance using PreActResNet-18. Results are reported as mean $\pm$ standard deviation over three runs.}
\label{tab:cifar10_results}

\end{table}

\subsection{Results on CIFAR-100}
\label{results_cifar100}

Table~\ref{tab:cifar100_results} reports the CIFAR-100 results using PreActResNet-18. Similar to CIFAR-10, Clean Training provides no robustness under PGD-50. AGAT achieves the strongest clean and robust accuracy among the adversarial training methods at both selection ratios. At \(\tau=30\%\), it improves robust accuracy over Full PGD-AT by 4.53 percentage points and over Margin-Based Selection by 5.37 percentage points.












\begin{table}[ht]
\centering
\small

\resizebox{\columnwidth}{!}{%
\begin{tabular}{lccc}
\toprule
Method & $\tau$ & Clean Accuracy (\%) & PGD-50 (\%) \\
\midrule
Clean Training
& $-$
& $\mathbf{58.46 \pm 0.32}$
& $0.00 \pm 0.00$ \\

Full PGD-AT
& $-$
& $43.54 \pm 0.07$
& $14.21 \pm 0.00$ \\

\midrule
IDS
& 30
& $45.45 \pm 0.70$
& $13.23 \pm 0.10$ \\

IDS
& 20
& $39.39 \pm 0.14$
& $13.21 \pm 0.05$ \\

\midrule
Random Selection
& 30
& $42.10 \pm 1.05$
& $8.73 \pm 0.49$ \\

Random Selection
& 20
& $42.31 \pm 0.19$
& $10.22 \pm 0.58$ \\

\midrule
Margin-Based
& 30
& $42.29 \pm 0.15$
& $13.37 \pm 0.06$ \\

Margin-Based
& 20
& $41.97 \pm 0.86$
& $12.83 \pm 0.04$ \\

\midrule
AGAT 
& 30
& $49.24 \pm 0.12$
& $\mathbf{18.74 \pm 0.46}$ \\

AGAT
& 20
& $47.56 \pm 0.01$
& $18.27 \pm 0.25$ \\

\bottomrule
\end{tabular}%
}
\caption{CIFAR-100 classification performance using PreActResNet-18. Results are reported as mean $\pm$ standard deviation over three runs.}
\label{tab:cifar100_results}
\end{table}

\subsection{Results on Tiny-ImageNet-200}
\label{results_tiny}

On Tiny-ImageNet-200, AGAT consistently improves robust accuracy over both
Full PGD-AT and the Margin-Based baseline. At \(\tau=30\%\), AGAT reaches
13.70\%, compared with 8.81\% for Full PGD-AT and 6.05\% for Margin-Based
Selection.












\begin{table}[ht]
\centering
\small

\resizebox{\columnwidth}{!}{%
\begin{tabular}{lccc}
\toprule
Method & $\tau$ & Clean Accuracy (\%) & PGD-50 (\%) \\
\midrule
Clean Training
& $-$
& $\mathbf{44.53 \pm 5.01}$
& $0.00 \pm 0.00$ \\

Full PGD-AT
& $-$
& $37.18 \pm 0.15$
& $8.81 \pm 0.03$ \\

\midrule
IDS
& 30
& $33.27 \pm 0.70$
& $7.39 \pm 0.01$ \\

IDS
& 20
& $26.33 \pm 0.70$
& $7.72 \pm 0.11$ \\

\midrule
Random Selection
& 30
& $30.82 \pm 0.31$
& $7.83 \pm 0.54$ \\

Random Selection
& 20
& $28.97 \pm 0.06$
& $8.79 \pm 0.13$ \\

\midrule
Margin-Based
& 30
& $31.68 \pm 0.38$
& $6.05 \pm 0.04$ \\

Margin-Based
& 20
& $31.18 \pm 0.21$
& $5.85 \pm 0.01$ \\

\midrule
AGAT 
& 30
& $40.66 \pm 0.54$
& $\mathbf{13.70 \pm 0.68}$ \\

AGAT 
& 20
& $38.68 \pm 1.19$
& $12.90 \pm 0.11$ \\

\bottomrule
\end{tabular}%
}
\caption{Tiny-ImageNet-200 classification performance using PreActResNet-18. Results are reported as mean $\pm$ standard deviation over three runs.}
\label{tab:tinyimagenet_results}
\end{table}

\subsection{Computational Cost}
\label{computational_Cost}

Table~\ref{com_cost} reports the total training runtime of AGAT and the baselines on CIFAR-10, CIFAR-100, and Tiny-ImageNet-200 using PreActResNet-18. Results are reported in minutes as mean $\pm$ standard deviation over three independent runs, with lower runtime indicating greater computational efficiency.

AGAT requires less training time than both Margin-Based Selection and IDS across all three datasets at both selection ratios ($\tau=20\%$ and $30\%$). Compared with Full PGD-AT, AGAT consistently achieves faster training times on CIFAR-10, CIFAR-100, and Tiny-ImageNet-200 at both selection ratios. At $\tau=20\%$, AGAT reduces the training time on CIFAR-100 from $36.63$ to $15.95$ minutes, representing a reduction of approximately $56.5\%$. At the same selection ratio on Tiny-ImageNet-200, it reduces the training time from $230.21$ to $114.03$ minutes, corresponding to a reduction of approximately $50.5\%$. Overall, on these three datasets, AGAT achieves higher robust accuracy than Full PGD-AT and all evaluated selective training baselines. Its runtime advantage holds over all adversarial training baselines included in Table~\ref{com_cost}.

\begin{table}[ht]
\centering
\small

\resizebox{0.5\textwidth}{!}{
\begin{tabular}{lcccc}
\toprule
Method & $\tau$ (\%) & CIFAR-10 & CIFAR-100 & Tiny-ImageNet-200 \\
\midrule
Clean Training
& $-$
& $3.25 \pm 0.01$
& $8.07 \pm 0.02$
& $64.68 \pm 2.84$ \\

Full PGD-AT
& $-$
& $18.73 \pm 0.01$
& $36.63 \pm 0.01$
& $230.21 \pm 0.93$ \\

\midrule
IDS
& 30
& $23.99 \pm 0.06$
& $23.73 \pm 0.00$
& $250.73 \pm 3.36$ \\

IDS
& 20
& $24.18 \pm 0.18$
& $23.97 \pm 0.07$
& $247.51 \pm 2.58$ \\

\midrule
Margin-Based
& 30
& $98.61 \pm 0.75$
& $95.21 \pm 0.40$
& $451.97 \pm 9.16$ \\

Margin-Based
& 20
& $99.94 \pm 1.32$
& $100.09 \pm 0.85$
& $403.07 \pm 2.53$ \\

\midrule
AGAT
& 30
& $16.33 \pm 0.01$
& $16.56 \pm 0.06$
& $121.51 \pm 3.03$ \\

AGAT
& 20
& $16.13 \pm 0.02$
& $15.95 \pm 0.02$
& $114.03 \pm 2.38$ \\

\bottomrule
\end{tabular}
}
\caption{Total training runtime of AGAT and the baselines on CIFAR-10, CIFAR-100, and Tiny-ImageNet-200 using PreActResNet-18. Results are reported in minutes as mean $\pm$ standard deviation over three independent runs.}
\label{com_cost}
\end{table}

%% file: conclusion.tex


\section{Conclusion}

This work aims to reduce the computational demands of PGD-based adversarial training by reconsidering how adversarial supervision is distributed across training samples. Our results show that strong robust accuracy can be achieved without perturbing every training example. However, existing point-wise selection criteria may be limited because they evaluate the model only at the original input and do not account for stability in the surrounding neighborhood. 
To address this limitation, we propose AGAT, a selective adversarial training framework that uses Augmentation-based Local Stability (ALS) as a lightweight, neighborhood-sensitive signal for identifying unstable samples. While perturbing only 20--30\% of each minibatch, AGAT achieves robust accuracy close to Full PGD-AT on MNIST and higher robust accuracy on CIFAR-10, CIFAR-100, and Tiny-ImageNet-200. AGAT also requires less training time than all other adversarial training baselines in the computational cost evaluation on the latter three datasets. These results demonstrate that ALS-based sample selection can reduce unnecessary PGD computation while maintaining strong adversarial robustness.


%% file: appendix.tex
\section{Point-wise Confidence
\label{appendix:point-wise-confidence}}

Theorem~\ref{thm:pointwise-blindness} shows that pointwise confidence can be misleading. In this proof, we construct a classifier that has arbitrarily large margin and small cross-entropy loss at a single point \( \mathbf{x}_0 \), while still having an arbitrarily large local Lipschitz constant near that point. Thus, the classifier's output can change sharply under arbitrarily small perturbations near \( \mathbf{x}_0 \).

\pointwiseconfidence*

\begin{proof}
Choose any point $\mathbf{x}_0\in\mathbb{R}^d$ and let $\mathbf{u}\in\mathbb{R}^d$ be a unit vector, i.e.\ $\|\mathbf{u}\|=1$. We want to make the function change sharply in the direction 
$\mathbf{u}$.

Let $A>0$ and $n>0$ be constants to be chosen later.
Define the logits by
\[
f_1(\mathbf{z}) = A - n \langle \mathbf{z} - \mathbf{x}_0, \mathbf{u} \rangle, \quad
f_2(\mathbf{z}) = 0,
\]
for every \( \mathbf{z} \in \mathbb{R}^d \).
Thus
\(
f(\mathbf{z}) = (f_1(\mathbf{z}), f_2(\mathbf{z})) = \left( A - n \langle \mathbf{z} - \mathbf{x}_0, \mathbf{u} \rangle, 0 \right).
\) 
\( A \) controls how confident the model is at \( \mathbf{x}_0 \), and \( n \) controls how sharply the output changes near \( \mathbf{x}_0 \). 

\medskip
\noindent
\textbf{Margin at $\mathbf{x}_0$.} At the initial point $\mathbf{x}_0$, $\mathbf{z}=\mathbf{x}_0$. 
So, $\mathbf{z}-\mathbf{x}_0=\mathbf{0}$ and we have $\langle \mathbf{z}-\mathbf{x}_0,\mathbf{u}\rangle = 0$. Hence,
\[
f_1(\mathbf{x}_0) = A, \quad
f_2(\mathbf{x}_0) = 0.
\]
So,
\[
f(\mathbf{x}_0) = (A,0).
\]
Therefore,
\[
M(\mathbf{x}_0) =  f_1(\mathbf{x}_0) - f_2(\mathbf{x}_0) = A - 0 = A.
\]
Choosing \(A>\tau\) yields \(M(\mathbf{x}_0)>\tau\).

\medskip
\noindent
\textbf{Cross-entropy loss at $\mathbf{x}_0$.}
At $\mathbf{x}_0$, the softmax probability of class $1$ is
\[
p_1(\mathbf{x}_0) = \frac{e^{f_1(\mathbf{x}_0)}}{e^{f_1(\mathbf{x}_0)} + e^{f_2(\mathbf{x}_0)}}=
\frac{e^A}{e^A+1},
\]
and thus
{\small
\[
\ell(f(\mathbf{x}_0),1)
=
-\log\!\left(\frac{e^A}{e^A + 1}\right) 
=
\log\!\left(\frac{e^A + 1}{e^A}\right)
=
\log(1+e^{-A}).
\]
}
As \( A \to \infty \), we have \( \log(1+e^{-A}) \to 0 \). Therefore, since \( \varepsilon > 0 \), we may choose \( A \) large enough so that
\[
A>\tau
\qquad\text{and}\qquad
\log(1+e^{-A}) < \varepsilon.
\]
Hence
\[
M(\mathbf{x}_0)>\tau
\qquad\text{and}\qquad
\ell(f(\mathbf{x}_0),1)<\varepsilon.
\]

\medskip
\noindent
\textbf{Behavior in a neighborhood of $\mathbf{x}_0$.} We now look at what happens near $\mathbf{x}_0$, not just at $\mathbf{x}_0$. Fix any $r>0$. Let $\mathbf{u}$ be the unit vector from earlier, and for any real number $t$ with $0<|t|<r$, define
\[
\mathbf{z}_t := \mathbf{x}_0 + t\mathbf{u}.
\]
This is the point obtained by moving a distance $|t|$ from $\mathbf{x}_0$ in the direction $\mathbf{u}$.
Since $\|\mathbf{u}\|=1$, we have
\[
\|\mathbf{z}_t - \mathbf{x}_0\|
=
\|t\mathbf{u}\|
=
|t|.
\]
Thus $\mathbf{z}_t \in B_r(\mathbf{x}_0)$ whenever $0<|t|<r$.
Moreover,
\[
\mathbf{z}_t - \mathbf{x}_0 = t\mathbf{u},
\]
and hence
\[
\langle \mathbf{z}_t - \mathbf{x}_0, \mathbf{u} \rangle
=
\langle t\mathbf{u}, \mathbf{u} \rangle
=
t.
\]
By definition of $f$, we have
\[
f(\mathbf{z}_t)
=
\bigl(A - n\langle \mathbf{z}_t - \mathbf{x}_0, \mathbf{u} \rangle,\; 0\bigr).
\]
Using the identity above, this simplifies to
\[
f(\mathbf{z}_t)
=
(A - nt,\, 0).
\]
Recall that $f(\mathbf{x}_0) = (A,0)$. Therefore,
\[
f(\mathbf{x}_0) - f(\mathbf{z}_t)
=
(A,0) - (A-nt,0)
=
(nt,0).
\]
Taking norms,
\[
\|f(\mathbf{x}_0) - f(\mathbf{z}_t)\|
=
\|(nt,0)\|
=
|nt|.
\]
At the same time,
\[
\|\mathbf{x}_0 - \mathbf{z}_t\| = |t|.
\]
Hence, for $t \neq 0$,
\[
\frac{\|f(\mathbf{x}_0)-f(\mathbf{z}_t)\|}{\|\mathbf{x}_0-\mathbf{z}_t\|}
=
\frac{|nt|}{|t|}
=
n.
\]

Since both $\mathbf{x}_0$ and $\mathbf{z}_t$ lie in $B_r(\mathbf{x}_0)$, the pair $(\mathbf{x}_0,\mathbf{z}_t)$ is admissible in Definition~\ref{def:lipschitz}. Therefore,
\[
L_{\mathbf{x}_0}(r)
\ge
\frac{\|f(\mathbf{x}_0)-f(\mathbf{z}_t)\|}{\|\mathbf{x}_0-\mathbf{z}_t\|}
=
n.
\]
Finally, choose $n > C$. Then
\[
L_{\mathbf{x}_0}(r) \ge n > C,
\]
which implies $L_{\mathbf{x}_0}(r) > C$.

\end{proof}

\section{ALS is controlled by local Lipschitzness
\label{appendix:als-is-deterministic}}

Theorem~\ref{thm:als-lower-bound} shows that the ALS score is controlled by the local Lipschitz behavior of the classifier. In this proof, we use the fact that IDBH augmentation can produce an augmented point \(T(\mathbf{x})\) that lies inside the local neighborhood of \(\mathbf{x}\), so the change from \(\mathbf{x}\) to \(T(\mathbf{x})\) is one of the changes measured by the local Lipschitz constant. Thus, ALS cannot be larger than the worst-case local change of the classifier near \(\mathbf{x}\), making it a useful measure for identifying unstable samples.
\alslowerbound*

\begin{proof}
Since \( \mathbf{x} \in B_\epsilon(\mathbf{x}) \) and \( T(\mathbf{x}) \in B_\epsilon(\mathbf{x}) \), the pair
\(
(\mathbf{x}, \; T(\mathbf{x}))
\)
is one admissible pair in the definition of the local Lipschitz constant \(L_{\mathbf{x}}(\epsilon)\) (Definition~\ref{def:lipschitz}). Therefore,
\[
L_{\mathbf{x}}(\epsilon)
\ge
\frac{\|f(\mathbf{x}) - f(T(\mathbf{x}))\|}{\|\mathbf{x} - T(\mathbf{x})\|}.
\]
By the definition of the ALS score in Equation~\eqref{eq:als_score},
\[
\mathrm{ALS}(\mathbf{x})
=
\frac{\|f(\mathbf{x}) - f(T(\mathbf{x}))\|}{\|\mathbf{x} - T(\mathbf{x})\|}.
\]
Hence,
\[
\mathrm{ALS}(\mathbf{x}) \leq L_{\mathbf{x}}(\epsilon).
\]
\end{proof}





%% file: references.bib
@article{esteva2017dermatologist,
  title={{Dermatologist-level classification of skin cancer with deep neural networks}},
  author={Esteva, Andre and Kuprel, Brett and Novoa, Roberto A. and Ko, Justin and Swetter, Susan M. and Blau, Helen M. and Thrun, Sebastian},
  journal={Nature},
  volume={542},
  number={7639},
  pages={115--118},
  year={2017}
}

@inproceedings{sutskever2014sequence,
 author = {Sutskever, Ilya and Vinyals, Oriol and Le, Quoc V.},
 booktitle = {Advances in Neural Information Processing Systems},
 editor = {Z. Ghahramani and M. Welling and C. Cortes and N. Lawrence and K. Weinberger},
 publisher = {Curran Associates, Inc.},
 title = {Sequence to Sequence Learning with Neural Networks},
 year = {2014}
}

@inproceedings{asante-etal-2026-decomposed,
    title = {{Decomposed Trust: Privacy, Adversarial Robustness, Ethics, and Fairness in Low-Rank {LLM}s}},
    author = {Asante, Daniel Agyei  and
      Chowdhury, Md Mokarram  and
      Li, Yang},
    booktitle = {Findings of the {A}ssociation for {C}omputational {L}inguistics: {ACL} 2026},
    year = {2026}
}

@inproceedings{asante2024predicting,
  title={{Predicting Electricity Market Price Using Machine Learning and Quantifying Dependency Beyond Renewable Energy}},
  author={Asante, Daniel Agyei and Tetteh, Mark and Turkson, Regina Esi and Agyei, Lucy Manu and Asare, Obadiah Denteh},
  booktitle={2024 International Conference on Sustainable Technology and Engineering (i-COSTE)},
  pages={1--6},
  year={2024},
  organization={IEEE},
}

@inproceedings{sat,
  title={{On adversarial training without perturbing all examples}},
  author={Losch, Max and Omran, Mohamed and Stutz, David and Fritz, Mario and Schiele, Bernt},
  booktitle={International Conference on Learning Representations (ICLR)},
  year={2024}
}

@article{margin-based,
  title={{Scaling Adversarial Training via Data Selection}},
  author={Ye, Youran and Wang, Dejin and Bhandare, Ajinkya},
  journal={arXiv preprint arXiv:2512.22069},
  year={2025}
}

@article{idbh,
  title={{Data Augmentation Alone can Improve Adversarial Training}},
  author={Li, Lin and Spratling, Michael},
  journal={arXiv preprint arXiv:2301.09879},
  year={2023}
}

@inproceedings{ids,
  title={{Adversarial Training with Informed Data Selection}},
  author={Mendon{\c{c}}a, Marcele OK and Maroto, Javier and Frossard, Pascal and Diniz, Paulo SR},
  booktitle={European Signal Processing Conference (EUSIPCO)},
  year={2022},
}

@article{goodfellow2014explaining,
  title={{Explaining and Harnessing Adversarial Examples}},
  author={Goodfellow, Ian J and Shlens, Jonathon and Szegedy, Christian},
  journal={arXiv preprint arXiv:1412.6572},
  year={2014}
}

@article{akhtar2018threat,
  author={Akhtar, Naveed and Mian, Ajmal},
  journal={IEEE Access}, 
  title={{Threat of Adversarial Attacks on Deep Learning in Computer Vision: A Survey}}, 
  year={2018},
  volume={6},
  number={},
  pages={14410-14430},
  }

@article{finlayson2019adversarial,
  title={{Adversarial Attacks on Medical Machine Learning}},
  author={Finlayson, Samuel G and Bowers, John D and Ito, Joichi and Zittrain, Jonathan L and Beam, Andrew L and Kohane, Isaac S},
  journal={Science},
  volume={363},
  number={6433},
  pages={1287--1289},
  year={2019},
}

@article{madry2017towards,
  title={{Towards Deep Learning Models Resistant to Adversarial Attacks}},
  author={Madry, Aleksander and Makelov, Aleksandar and Schmidt, Ludwig and Tsipras, Dimitris and Vladu, Adrian},
  journal={arXiv preprint arXiv:1706.06083},
  year={2017}
}

@inproceedings{preactresnet18,
  title={{Identity Mappings in Deep Residual Networks}},
  author={He, Kaiming and Zhang, Xiangyu and Ren, Shaoqing and Sun, Jian},
  booktitle={European conference on computer vision},
  year={2016},
}

@inproceedings{malware,
  title={{Exploring Adversarial Examples in Malware Detection}},
  author={Suciu, Octavian and Coull, Scott E and Johns, Jeffrey},
  booktitle={2019 IEEE Security and Privacy Workshops (SPW)},
  year={2019},
}

@inproceedings{agarwal2019improving,
  author={Agarwal, Chirag and Nguyen, Anh and Schonfeld, Dan},
  booktitle={IEEE International Conference on Image Processing (ICIP)}, 
  title={{Improving Robustness to Adversarial Examples by Encouraging Discriminative Features}}, 
  year={2019},
  }

@inproceedings{randomized_smoothing,
  title = 	 {{Certified Adversarial Robustness via Randomized Smoothing}},
  author =       {Cohen, Jeremy and Rosenfeld, Elan and Kolter, Zico},
  booktitle = 	 { International Conference on Machine Learning (ICLR)},
  year = 	 {2019}
}

@article{li2024aroid,
  title   = {{{AROID}: Improving Adversarial Robustness Through Online Instance-Wise Data Augmentation}},
  author  = {Li, Lin and Qiu, Jianing and Spratling, Michael W.},
  journal = {International Journal of Computer Vision},
  year    = {2024}
}

@ARTICLE{726791,
  author={Lecun, Y. and Bottou, L. and Bengio, Y. and Haffner, P.},
  journal={Proceedings of the IEEE}, 
  title={Gradient-based learning applied to document recognition}, 
  year={1998},
  volume={86},
  number={11},
  pages={2278-2324},
}

@inproceedings{Deng2009ImageNet,
  author    = {Deng, Jia and Dong, Wei and Socher, Richard and Li, Li-Jia and Li, Kai and Fei-Fei, Li},
  title     = {ImageNet: A Large-Scale Hierarchical Image Database},
  booktitle = {IEEE Conference on Computer Vision and Pattern Recognition (CVPR)},
  year      = {2009},
  pages     = {248--255}
}

@techreport{krizhevsky2009learning,
  title={Learning multiple layers of features from tiny images},
  author={Krizhevsky, Alex and Nair, Vinod and Hinton, Geoffrey},
  year={2009},
  institution={University of Toronto}
}
